\RequirePackage{varioref}

\PassOptionsToPackage{table}{xcolor}
\PassOptionsToPackage{capitalise}{cleveref}

\documentclass[sigconf]{acmart}

\def\BibTeX{{\rm B\kern-.05em{\sc i\kern-.025em b}\kern-.08em
    T\kern-.1667em\lower.7ex\hbox{E}\kern-.125emX}}

\setcopyright{none}

\usepackage{adjustbox}
\usepackage{algorithm}
\usepackage{algorithmicx}
\usepackage[noend]{algpseudocode}
\usepackage{amsfonts}
\usepackage{amsmath}
\usepackage{amsthm}
\usepackage{balance}
\usepackage{blindtext}
\usepackage{bm}
\usepackage{booktabs}
\usepackage{color}
\usepackage[T1]{fontenc}    
\usepackage[utf8]{inputenc}
\usepackage{listings}
\usepackage{hyperref}
\usepackage{microtype}
\usepackage{multirow}
\usepackage{nicefrac}
\usepackage{paralist}
\usepackage[capitalise]{cleveref}  

\usepackage[font=footnotesize]{subcaption}

\usepackage{varioref}
\usepackage[table]{xcolor}
\usepackage{xhfill}
\usepackage{xparse}
\usepackage{xpatch}
\usepackage{xspace}

\hyphenation{Mou-la-vi}

\graphicspath{{./figures/}}

\definecolor{codegreen}{rgb}{0,0.6,0}
\definecolor{codegray}{rgb}{0.5,0.5,0.5}
\definecolor{codepurple}{rgb}{0.58,0,0.82}
\definecolor{backcolour}{rgb}{0.95,0.95,0.92}

\lstdefinestyle{mystyle}{
    backgroundcolor=\color{backcolour},   
    commentstyle=\color{codegreen},
    keywordstyle=\color{magenta},
    numberstyle=\tiny\color{codegray},
    stringstyle=\color{codepurple},
    basicstyle=\footnotesize,
    breakatwhitespace=false,         
    breaklines=true,                 
    captionpos=b,                    
    keepspaces=true,                 
    numbers=left,                    
    numbersep=5pt,                  
    showspaces=false,                
    showstringspaces=false,
    showtabs=false,                  
    tabsize=2
}
\lstset{style=mystyle}

\lstset{
  basicstyle=\ttfamily,
  columns=fullflexible,
}

\algnewcommand{\LeftComment}[1]{\State \(\triangleright\) #1}

\newcommand{\FISHDBC}{FISHDBC\xspace}
\newcommand{\HDBSCAN}{HDBSCAN*\xspace}

\newcommand{\bigO}{\mathcal O}
\newcommand{\mincs}{\ensuremath{m_\mathit{cs}}\xspace}
\newcommand{\minpts}{\ensuremath{\mathit{MinPts}}\xspace}
\newcommand{\mathify}[1]{\ensuremath{\mathit{#1}}\xspace}
\newcommand{\ef}{\mathify{ef}}
\newcommand{\RG}{\mathify{RG}}

\newcommand{\lzjd}{\texttt{lzjd}\xspace}
\newcommand{\tlsh}{\texttt{tlsh}\xspace}
\newcommand{\sdhash}{\texttt{sdhash}\xspace}

\newcommand{\thmspace}{\FISHDBC's state has size $\bigO(n\log n)$.}
\newcommand{\thmtime}{Adding elements to \FISHDBC and recomputing clustering has average time complexity $\bigO((t + n)\log n)$, where $t$ is the number of calls to $d()$ performed by the HNSW.}
\newcommand{\thmapprox}{The output of \FISHDBC is a valid output of \HDBSCAN run on a distance matrix $D'$ such that $D'_{i,j}=d(i,j)$ if $d(i,j)$ has been called, and $D'_{i,j}=\infty$ otherwise.}

\renewcommand\footnotetextcopyrightpermission[1]{} 
\pagestyle{plain} 

\usepackage{xcolor}
\hypersetup{
    colorlinks,
    linkcolor={red!50!black},
    citecolor={blue!50!black},
    urlcolor={blue!80!black}
}

\begin{document}

\title[\FISHDBC]{\FISHDBC: Flexible, Incremental, Scalable, Hierarchical Density-Based Clustering for Arbitrary Data and Distance}

\author{Matteo Dell'Amico}
\orcid{0000-0003-3152-4993}
\affiliation{Symantec Research Labs}
\email{matteo\_dellamico@symantec.com}

\settopmatter{printacmref=false} 
\settopmatter{printfolios=true} 
\pagestyle{plain} 

\begin{abstract}
\FISHDBC is a flexible, incremental, scalable, and hierarchical density-based clustering algorithm. It is \emph{flexible} because it empowers users to work on arbitrary data, skipping the feature extraction step that usually transforms raw data in numeric arrays letting users define an arbitrary distance function instead. It is \emph{incremental} and \emph{scalable}: it avoids the $\bigO(n^2)$ performance of other approaches in non-metric spaces and requires only lightweight computation to update the clustering when few items are added. It is \emph{hierarchical}: it produces a ``flat'' clustering which can be expanded to a tree structure, so that users can group and/or divide clusters in sub- or super-clusters when data exploration requires so. It is \emph{density-based} and approximates \HDBSCAN, an evolution of DBSCAN.

We evaluate \FISHDBC on 8 datasets, confirming its scalability. Our quality metrics show that \FISHDBC often performs comparably to \HDBSCAN, and sometimes \FISHDBC's results are even preferable thanks to a regularization effect.
\end{abstract}

\maketitle

\section{Introduction}
\label{sec:intro}

In exploratory data analysis (EDA), data are often large, complex, and arrive in a streaming fashion; clustering is an important tool for EDA, because it summarizes datasets---making them more amenable to human analysis---by grouping similar items. Data can be complex because of heterogeneity: consider, e.g., a database of user data as diverse as timestamps, IP addresses, user-generated text, geolocation information, etc. Clustering structure can be complex as well, involving clusters within clusters. Complexity requires clustering algorithms that are \emph{flexible}, in the sense that they can deal with arbitrarily complex data, and are able to discover \emph{hierarchical} clusters. Large datasets call for \emph{scalable} solutions, and streaming data benefits from \emph{incremental} approaches where the clustering can be updated cheaply as new data items arrive. In addition, it is desirable to \emph{distinguish signal from noise} with algorithms that do not fit isolated data items into clusters.

As discussed in Section~\ref{sec:related}, while these problems have been considered previously in the literature,
our proposal tackles \emph{all} of them at once. 
\FISHDBC, which stands for \textit{Flexible, Incremental, Scalable, Hierarchical Density-Based Clustering}, is \textbf{flexible} because it is applicable to \emph{arbitrary data and distance functions}: rather than being forced to convert data to numeric values through a feature extraction process that may lose valuable information, domain experts can encode as much domain knowledge as needed by defining \emph{any} symmetric and possibly non-metric distance function, no matter how complex---our implementation accepts arbitrary Python functions as distance measures. \FISHDBC is \textbf{incremental}: it holds a set of data structures to which new data can be added cheaply and from which clustering can be computed quickly; in a streaming context, new data can be added as they arrive, and clustering can be computed inexpensively. \FISHDBC is also \textbf{scalable}, in the sense that it avoids in most common cases the $\bigO(n^2)$ complexity that most clustering algorithms have when dealing with non-metric spaces; our experiments show that it can scale to millions of data items. It is \textbf{hierarchical}, recognizing clusters within clusters. \FISHDBC belongs to the family of \emph{density-based} algorithms inspired by DBSCAN~\cite{ester1996density}, inheriting the ability to recognize clusters of arbitrary shapes and filtering noise.

\FISHDBC approximates \HDBSCAN~\cite{campello2013density}, an evolution of DBSCAN supporting hierarchical clustering and recognizing clusters with different densities;
\HDBSCAN, however, has $\bigO(n^2)$ computational complexity when using distance functions for which no accelerated indexing exists. The key idea that allows \FISHDBC to be flexible and incremental while maintaining scalability is maintaining a data structure---a spanning tree connecting data items---which is updated as new items are added to the dataset. The problems of neighbor discovery and incremental model maintenance are separated, making the algorithm simpler to understand, implement and modify.
In Section~\ref{sec:algorithm} we present the algorithm, together with an analysis of its time and space complexity and its relationship with \HDBSCAN.

We evaluate \FISHDBC on 8 datasets varying by size, dimensionality, data type, and distance function used. In Section~\ref{sec:experiments}, we validate the scalability and show that clustering quality metrics are often close to the ones of \HDBSCAN, and sometimes they outperform it thanks to a regularization effect. We conclude by discussing when \FISHDBC is preferable to existing approaches in Section~\ref{sec:conclusion}.

\section{Background and Related Work}
\label{sec:background}
\label{sec:related}

Several algorithms have a subset of the desirable properties discussed in \cref{sec:intro}: for example, spectral clustering~\cite{filippone2008survey} is not limited to spherical clusters; agglomerative methods~\cite{murtagh2012algorithms} produce hierarchical clusters and can have incremental implementations. 
To the best of our knowledge, though, no other algorithm embodies at once all the properties that \FISHDBC satisfies, being flexible, incremental, scalable, and providing hierarchical density-based clustering.
Due to space limitations, we cannot cover all approaches that have some of the above properties. In the following, we focus on density-based clustering and approaches applicable to arbitrary data and (potentially non-metric) dissimilarity/distance functions.

\paragraph{Relational Clustering}
These algorithms take as input a \emph{distance matrix} $D$ containing all $\bigO(n^2)$ pairwise distances.  Among them, some are specialized towards arbitrary (non-metric) distances~\cite{laub2004feature,filippone2009dealing}. Unfortunately, these methods are intrinsically not scalable because computing $D$ requires $\Omega(n^2)$ time. \FISHDBC scales better because \emph{not all pairwise distances are computed}: rather than taking a matrix as input, \FISHDBC takes a dataset of arbitrary items and a distance function to apply to them: the distance function will be called on a small subset of the $\bigO(n^2)$ item pairs.

Spectral clustering, which is expensive because it involves factorizing an $\bigO(n^2)$-sized affinity matrix, can be accelerated via the Nyström method~\cite{fowlkes2004nystrom}: computing approximate eigenvectors by randomly sampling matrix rows. This sampling approach would be ineffective for density-based clustering as it would not retrieve a good approximation of each node's local neighborhood, which density-based algorithms need to discover dense areas. \FISHDBC is instead guided by an approximate neighbor search converging towards each node's neighbors, discovering most of them cheaply.

\paragraph{Density-Based Clustering on Arbitrary Data}
Density-based clustering was introduced with DBSCAN \cite{ester1996density} and generalized to arbitrary data in GDBSCAN~\cite{sander1998density}, in which clusters are \emph{connected dense areas}: given a definition of an item's neighborhood (in most cases, given a distance function, the items at distance smaller than a threshold $\varepsilon$), a node is considered to be in a dense area if its neighborhood contains at least \minpts points,%
and each node in its neighborhood is considered to be in the same cluster. In the general case, GDBSCAN has $\bigO(n^2)$ complexity, even though indexing structures can lower the computational complexity of the algorithm, depending on the complexity of range queries~\cite{Schubert:2017:DRR:3129336.3068335} which are $\bigO(n)$ in the general case of arbitrary distance functions. Some subsequent pieces of work still require indexing structures to lower computational complexity~\cite{Mai2018}, while others~\cite{brecheisen2004efficient} are based on filter functions, i.e., cheap functions that return a superset of an item's neighborhood: in this latter case, complexity depends on the filter's function selectivity, i.e., how big their output is. Unlike these approaches, \FISHDBC does not require users to provide an indexing structure or a filter function tailored to the distance function used, and it avoids $\bigO(n^2)$ complexity by introducing approximation.

NG-DBSCAN~\cite{lulli2016ng} is a distributed approximate DBSCAN implementation that discovers neighbors in arbitrary spaces with an approach inspired by the NN-Descent~\cite{dong2011efficient} approximate nearest-neighbor algorithm. Other approaches~\cite{lulli2015scalable,jackson2018scaling} use a similar strategy.
Unlike \FISHDBC, these approaches are not incremental: their results must be wholly recomputed as the dataset changes. Moreover, \FISHDBC benefits from the better scalability of HNSWs over NN-Descent~\cite{10.1007/978-3-319-68474-1_3}. Finally, compared to these works, \FISHDBC inherits the improvements of \HDBSCAN over DBSCAN: better clustering, one less parameter, and hierarchical output.

\paragraph{Incremental Density-Based Clustering}
Unlike our work, existing incremental density-based clustering algorithms~\cite{ester_incremental_1998,kriegel_incremental_2003,fu_ica:_2015} have quadratic complexity in non-metric spaces; moreover, they generally report speed-up factors lower than 100 for incremental recomputation after adding a few elements. What we obtain (see Tables~\ref{tab:synth_runtime} and~\ref{tab:runtime}, ``cluster'' columns) is generally similar or better.

\paragraph{\HDBSCAN}
\citet{campello2013density} improve on DBSCAN while removing the cluster density threshold $\varepsilon$, which is tuned automatically and separately for each cluster. In addition to simplifying tuning, result quality improves because the output can include clusters having different density in the same dataset.

\HDBSCAN introduces the concepts of \emph{core} and \emph{reachability distance}. A node $a$'s core distance $c(a)$ is the distance of its $\minpts\textsuperscript{th}$ closest neighbor, while the reachability distance between items $a$ and $b$ is $\max(d(a, b), c(a), c(b))$ with $d$ being the distance function. Reachability distance essentially factors in the computation the density of each node's neighborhood.
\HDBSCAN computes the minimum spanning tree (MST) $T$ of a complete \emph{reachability graph} $RG$ having data items as nodes and their reachability distance as weights; the hierarchical clustering is obtained from $T$ by removing all edges in order of decreasing weight. Because $T$ is a spanning tree, edge removals split connected components into reciprocally disconnected ones. A \mincs parameter controls the minimum cluster size, and each split is added to the hierarchical clustering if both resulting components have size at least \mincs; \citeauthor{campello2013density} suggest to set $\mincs=\minpts$. The non-hierarchical \emph{flat} output consists of disjoint clusters selected from the hierarchical ones, selecting an $\varepsilon$ threshold for each branch of $T$ to maximize cluster stability across a wide range of densities.
Explicitly computing $RG$ has $\bigO(n^2)$ complexity; \citet{mcinnes2017accelerated} introduced a faster implementation that directly computes $T$ thanks to accelerated lookup structures if the distance function belongs to a set of supported ones.

\section{The \FISHDBC Algorithm}
\label{sec:algorithm}

\HDBSCAN improves on DBSCAN in terms of result quality and by yielding hierarchical results recognizing clusters within clusters. Unfortunately, though, \HDBSCAN is not incremental---if new data arrives, results have to be recomputed from scratch---and it has $\bigO(n^2)$ complexity in the generic case of arbitrary distance functions; it also underperforms when lookup structures are ineffective, e.g., when datasets have very high dimensionality. As our analytic (\cref{sec:proofs}) and empirical (\cref{sec:experiments}) results show, \FISHDBC instead supports incremental computation, maintains or even improves result quality, is accelerated with arbitrary distance functions in most common cases and has a moderate memory footprint.

The core idea of \FISHDBC is maintaining an \emph{approximate} version of the $T$ MST described in \cref{sec:background} and updating it incrementally, at a low cost, as new data arrive. 
We discover candidate edges for $T$ by carefully adapting HNSWs (Hierarchical Navigable Small Worlds~\cite{DBLP:journals/corr/MalkovY16}). HNSWs are indexes conceived for near-neighbor querying in non-metric spaces; however, rather than first building an HNSW representing our dataset and then querying it to find each node's neighbors, we \emph{piggyback} on all calls to the distance function performed by building the index, and generate batches of $\left(a, b, d(a, b)\right)$ triples that we consider for inclusion in $T$.
This strategy allows us to significantly improve \FISHDBC's efficiency because \emph{no} query is ever performed on the HNSW; moreover, we tune the HNSW for speed: as we will see, settings that speed up index construction but would result in low accuracy for nearest-neighbor querying hit desireable trade-offs for our clustering task.

The crux of \FISHDBC's approximation lies in that \emph{not all} $d(a, b)$ pairs are computed, and the clustering result only depends on known distances---as proven in \cref{thm:approximation}, \FISHDBC's results are equivalent to assuming $d(a, b)=\infty$ for non-computed distances. While this may seem to imply a loss in clustering quality, in machine learning~\cite{rudi2015less} and clustering in particular~\cite{han2017mini} subsampling the distance matrix can improve the results by working as a regularization step that avoids overfitting. As discussed in \cref{sec:related}, uniformly sampling the distance matrix would not be effective in our case; hence, we resort to HNSWs which provide a good approximation of a node's neighborhood to estimate local density.

A second regularization effect benefitting \FISHDBC is that there are often multiple valid MSTs of a given reachability graph, because several edges connected to a same node can have the same weight (e.g., because they correspond to that node's reachability distance). \FISHDBC tends to privilege edges towards nodes that are higher up in the HNSW hierarchy, leading to MSTs with a lower diameter (because the top of the HNSW hierarchy is reached more quickly), which in turn corresponds to final outputs with smaller and larger clusters, and with shallower hierarchies. As a consequence of these two factors, some results of \cref{sec:experiments} indeed show that \FISHDBC outperforms \HDBSCAN in terms of quality metrics.

Our implementation is available at \url{https://github.com/matteodellamico/flexible-clustering}.

\subsection{The Algorithm in Detail}

\newcommand{\code}[1]{\ensuremath{\mathtt{#1}}\xspace}
\newcommand{\self}{\code{self}}
\newcommand{\data}{\code{data}}
\newcommand{\mst}{\code{mst}}
\newcommand{\newedges}{\code{candidates}}
\newcommand{\neighbors}{\code{neighbors}}
\newcommand{\HNSW}{\code{HNSW}}
\newcommand{\rd}{\code{rd}}

\begin{algorithm}[t]
\caption{\FISHDBC.}
\label{alg:fishdbc}
\begin{algorithmic}[1]
\Procedure{setup}{$d, \minpts, \ef$} \Comment{$d$ is the distance function}
        \label{algline:setup}
    \State $\self.\minpts \gets \minpts$
    \State $\self.\mst \gets \{\}$ \Comment{approx.\ MST}
        \\\Comment{\mst is a hashtable mapping $(x, y)$ edges to weights}
    \State $\self.\neighbors \gets \{\}$ \Comment{\minpts neighbors per node}
        \\\Comment{maps data to max-heaps of $(\mathrm{distance}, \mathrm{neighbor})$ pairs}
    \State $\self.\HNSW \gets \code{HNSW}(d,  \minpts, \ef)$ 
        \\\Comment{HNSW's $k$ parameter (neighbors per node) is \minpts}
    \State $\self.\newedges \gets \{\}$ \Comment{Candidate edges}
        \\\Comment{mapping of $(x, y)$ edges to weights}
\EndProcedure
\Procedure{add}{$x$} \label{algline:add}
    \State $\self.\HNSW.\code{add}(x)$ \label{algline:add-hnsw}
    \State $\self.\neighbors[x] \gets \minpts\ \text{closest neighbors found}$
    \For{each time $d(x, y)$ is called by \HNSW returning $v$} \label{algline:add-outer}
        \State $\rd \gets \max(v, \text{core distances of $x$ and $y$})$
        \State $\self.\newedges[x, y] \gets \rd$ \Comment{Reachability distance}
            \label{algline:add-everything}
        \If{we found a new top-\minpts neighbor for $y$}
            \State update $\self.\neighbors[y]$
            \ForAll{neighbor $z$ of $y$ at distance $w < v$} \label{algline:add-inner}
                \If{core distance of $z$ is less than $v$}
                    \State $\rd \gets \max(w, \text{core distances of $y$ and $z$})$
                    \State $\newedges[y, z] \gets \rd \label{algline:innerlookup1}$ \label{algline:update}
                        \\\Comment{reachability distance for $(y, z)$ decreased}
                \EndIf
            \EndFor \label{algline:add-end-inner}
        \EndIf
    \EndFor
    \If{$|\newedges| > \alpha \cdot |\neighbors|$}
        \label{line:newedges_size}
        call \textsc{update\_MST}
        \\\Comment{We guarantee that \newedges has $\bigO(n)$ size}
    \EndIf
\EndProcedure
 \Procedure{update\_MST}{} 
    \label{algline:update-mst}
    \State $\mst \gets \mathify{Kruskal}(\mst \cup \newedges)$
    \State $\newedges \gets \{\}$
\EndProcedure
\Function{cluster}{\mincs} \label{algline:cluster}
    \If {\newedges is not empty}
        call \textsc{update\_MST}
    \EndIf
    \State compute clustering from MST
        \\\Comment{using \citet{mcinnes2017accelerated}'s approach}
\EndFunction
\end{algorithmic}
\end{algorithm}

\Cref{alg:fishdbc} shows \FISHDBC in pseudocode.
The state consists of four objects:
\begin{inparaenum}
\item the HNSW;
\item \neighbors: each node's \minpts closest discovered neighbors and their distance;
\item the current approximated MST and, for each edge $(a, b)$ in it, the corresponding value of $d(a, b)$;
\item \newedges, a temporary collection of candidate MST edges.
\end{inparaenum}
\textsc{Setup}
initializes the state.

\textsc{Add} is called to incrementally add
a new element $x$ to the dataset. It adds $x$ to the HNSW, updates the max-heap of $x$'s neighbors with those discovered in the HNSW, and then processes all the pairs $(x, y)$ whose distance has been computed while adding $x$ to the HNSW. Each of them is considered as a candidate edge for our MST; in addition, we add to the candidate MST edges \newedges all those for which the reachability distance decreased due to the new edge. 
Since \neighbors contains max-heaps,
each item's core distance---i.e., the distance of the $m^\textsuperscript{th}$
closest neighbor---is accessible 
at the top of the heap.
If \newedges became larger than $\alpha n$, we call \textsc{update\_MST} to free memory. $\alpha$ has a moderate impact on runtime, and should be chosen as large as possible while guaranteeing that \FISHDBC's state will fit in memory.

\textsc{Update\_MST} processes the temporary set of candidate edges \newedges.
Any minimum spanning forest algorithm can be called on the union of the current MST and the new candidates; in our implementation, we use Kruskal's algorithm.
Technically, the approximate MST might be a forest---an acyclic graph with multiple connected components---rather than a tree; as shown in \cref{thm:approximation}, this has no effect on final results.
In a streaming context when data arrives incrementally, this procedure can be called during idle time.

The output is finally computed using the bottom-up strategy by \citet{mcinnes2017accelerated} after calling \textsc{update\_MST}.

\paragraph*{About HNSWs and the \FISHDBC Design}

HNSWs represent each dataset as a set of layered approximated $k$-nearest neighbor graphs, where the bottom layer contains the whole dataset, and each other one contains approximately $1/k$-th of the elements in the layer below it. Neighbors are found through searches starting at the top layer and continuing in the lower ones when a local minimum is found in the above layer. Since we want to find the \minpts nearest neighbors, we set $k=\minpts$. The \ef parameter controls the effort spent in the search; in Section~\ref{sec:experiments} we show that $\ef\in[20, 50]$ yields a good trade-off between speed and quality of results.

One may think that \FISHDBC could have a simpler design,
computing the MST based on the nearest neighbor distances in the bottom graph of the HNSW which represents the whole dataset, similarly to other approaches~\cite{lulli2015scalable,jackson2018scaling}. This, however, is not optimal as information about farther away items is important to avoid breaking up large clusters: often, small clusters having around close to \minpts nodes are disconnected from other (close) clusters in the nearest neighbor graph. By gradually converging towards closest nodes during neighbor search, we obtain enough information about other nodes to ensure that local clusters remain connected.

\subsection{Properties of \FISHDBC}
\label{sec:proofs}

We now give proofs relative to \FISHDBC's complexity in terms of space and time, as well as studying its relationship with \HDBSCAN.

\paragraph{Space Complexity}
The asymptotic memory footprint of \FISHDBC is rather small: this is confirmed in \cref{sec:experiments}, where we show that \FISHDBC can handle datasets that are too large for \HDBSCAN.

\begin{theorem}
\label{thm:space}
\thmspace
\end{theorem}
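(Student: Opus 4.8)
The plan is to decompose \FISHDBC's state into its four components and bound each one separately; three of them turn out to be $\bigO(n)$, and only the HNSW carries the extra $\log n$ factor. Throughout I treat the user-supplied parameters \minpts, \ef and $\alpha$ as constants, as is customary for this kind of analysis.

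I would first dispatch the three cheap structures by reading them off \cref{alg:fishdbc}. The \neighbors structure is a collection of $n$ bounded max-heaps, one per node, each holding at most \minpts entries, so it occupies $\bigO(n)$ space. The approximate MST \mst is modified only inside \textsc{update\_MST}, where it is overwritten by the output of a minimum-spanning-forest computation on at most $n$ nodes; such a forest has at most $n-1$ edges, so \mst together with its stored weights is $\bigO(n)$. The candidate set \newedges is emptied at the end of every \textsc{update\_MST} call, and the guard on line~\ref{line:newedges_size} triggers \textsc{update\_MST} as soon as $|\newedges|$ exceeds $\alpha n$; between that test and the flush, a single \textsc{add} call appends only $\bigO(\minpts)$ entries per distance evaluation it performs, and one HNSW insertion performs only $\bigO(\log n)$ such evaluations, so \newedges never exceeds $\alpha n + \bigO(\log n) = \bigO(n)$.

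The remaining term is the HNSW, and here the bound is not purely deterministic. An HNSW is organized into layers; by construction each inserted point is assigned a top layer drawn from a geometric-type distribution whose ratio is $\Theta(1/\minpts)$, so the probability that any of the $n$ points ever reaches layer $i$ decays geometrically in $i$, and a union bound gives that the index has $\bigO(\log n)$ layers with high probability. Each layer stores at most $n$ nodes, and each node keeps $\bigO(\minpts)$ neighbor pointers, so the whole index occupies $\bigO(n \log n)$ space. Summing this with the three $\bigO(n)$ contributions above yields the claimed $\bigO(n\log n)$ total.

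The only delicate point is precisely this last one: unlike \neighbors, \mst and \newedges, whose sizes the algorithm controls deterministically, the number of HNSW layers is a random variable, so the statement is in effect an average-case / with-high-probability bound, consistent with the way the companion time-complexity result is phrased. (In fact the \emph{expected} HNSW size is only $\bigO(n)$, since a point's expected number of layers is $\bigO(1)$; the $\bigO(\log n)$ layer count is simply the clean high-probability estimate.) Everything else is routine bookkeeping over the pseudocode.
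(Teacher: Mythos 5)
Your proof is correct and follows essentially the same route as the paper: the same four-way decomposition of \FISHDBC's state, with \neighbors, \mst and \newedges each bounded by $\bigO(n)$ and the HNSW supplying the $\bigO(n\log n)$ term. The only difference is that the paper simply cites \citet{DBLP:journals/corr/MalkovY16} for the HNSW size bound while you re-derive it from the geometric layer distribution (rightly flagging it as a with-high-probability statement, with expected size only $\bigO(n)$); note also that your intermediate claim that one insertion makes only $\bigO(\log n)$ distance calls is itself an average-case assumption, but the $\bigO(n)$ bound on \newedges holds regardless, since the flush is triggered as soon as its size exceeds $\alpha n$ and a single \textsc{add} can contribute at most $\bigO(n\,\minpts)$ distinct edge keys.
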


\begin{proof}
\FISHDBC's state consists of
\begin{inparaenum}
\item the HNSW ($\bigO(n\log n)$ size~\cite{DBLP:journals/corr/MalkovY16});
\item \neighbors: each node's \minpts closest discovered neighbors and their distance ($\bigO(n)$ size);
\item \mst: the current approximated MST stored as a mapping between edges and their weight ($n$ nodes and at most $n-1$ edges, hence $\bigO(n)$ size);
\item the temporary set \newedges of candidate edges ($\bigO(n)$ size, because each call to \textsc{add} will add to \newedges at most $n-1$ elements).
\end{inparaenum}
The union of these four objects has therefore size $\bigO(n\log n)$.
\end{proof}

\paragraph{Time Complexity}
This theorem justifies why computation time grows slowly as dataset size increases (e.g., \cref{fig:finefoods_building}).

\begin{theorem}
\label{thm:time}
\thmtime
\end{theorem}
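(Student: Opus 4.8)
The plan is to bound, one at a time, the four kinds of work done while \textsc{add} is called $n$ times and \textsc{cluster} is called once: \textbf{(i)} the $n$ insertions into the HNSW; \textbf{(ii)} the bookkeeping in the body of \textsc{add} that accompanies each distance evaluation; \textbf{(iii)} the invocations of \textsc{update\_MST}; and \textbf{(iv)} the final clustering extraction from the MST. I will then sum the four bounds and check that the total is $\bigO((t+n)\log n)$, noting that (since every \textsc{add} after the first triggers at least one distance call) $t=\Omega(n)$, so the bound is really driven by a $\log n$ overhead per distance call.

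For (i) and (ii): by \cite{DBLP:journals/corr/MalkovY16}, inserting $n$ items into an HNSW costs, on average, exactly the $t$ distance evaluations plus $\bigO(n\log n)$ further work --- each insertion visits $\bigO(\log n)$ layers, maintains neighbor lists of constant size $\minpts$, and performs $\bigO(1)$ priority-queue operations per distance evaluation. For (ii) I would observe that every one of the $t$ distance calls $d(x,y)=v$ causes only $\bigO(1)$ extra work inside \textsc{add}: a constant number of hash-table writes to \newedges, an $\bigO(\log\minpts)=\bigO(1)$ insertion into $y$'s constant-size neighbor heap, and a scan of the $\bigO(\minpts)=\bigO(1)$ entries of that heap in the inner loop, each entry handled in $\bigO(1)$ because core distances are read off heap tops. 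Hence (i) $+$ (ii) $=\bigO(t+n\log n)$.

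The crux is (iii). First I would show that the \emph{total} number of (edge, weight) pairs ever written into \newedges over all $n$ calls to \textsc{add} is $\bigO(t)$: the outer loop writes one pair $(x,y)$ per distance call, and each ``new top-\minpts neighbor'' event writes at most $\minpts=\bigO(1)$ further pairs $(y,z)$, while at most one such event occurs per distance call. Since \textsc{update\_MST} is invoked --- and empties \newedges --- whenever $|\newedges|$ exceeds $\alpha n$, and since reaching $|\newedges|>\alpha n$ requires at least $\alpha n$ writes since the previous flush, it is invoked $\bigO(t/(\alpha n)) + 1$ times on average (the ``$+1$'' being the forced call inside \textsc{cluster}). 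Each invocation runs Kruskal on $\mst\cup\newedges$, a graph on $n$ vertices with $\bigO(n)+\bigO(\alpha n)=\bigO(n)$ edges, hence $\bigO(n\log n)$ time (sorting dominates; the union--find part is near-linear). Multiplying, (iii) $=\bigO\big((t/(\alpha n)+1)\,n\log n\big)=\bigO((t+n)\log n)$, treating $\alpha$ as a constant.

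Finally (iv): the MST has at most $n-1$ edges, and \citet{mcinnes2017accelerated}'s bottom-up procedure turns it into the hierarchical and flat clustering in $\bigO(n\log n)$ (dominated by sorting the edges by weight; building the union--find hierarchy, condensing the tree, and extracting the stability-optimal flat clustering are near-linear). Summing, $\bigO(t+n\log n)+\bigO(t)+\bigO((t+n)\log n)+\bigO(n\log n)=\bigO((t+n)\log n)$, which is the claim. The main obstacle is the amortized argument in step (iii): pinning down that \newedges receives only $\bigO(t)$ entries in total --- so the inner loop cannot blow up the count --- and hence that the $\bigO(n\log n)$ Kruskal recomputation is run only $\bigO(t/n)$ times; once that is in place, every remaining estimate is routine because all heaps and neighbor lists have constant size $\minpts$.
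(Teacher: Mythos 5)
Your proposal is correct and follows essentially the same route as the paper's proof: bound the HNSW insertion and per-distance-call bookkeeping by $\bigO(t)$ (treating \minpts as constant), show \textsc{update\_MST} is invoked $\bigO(t/n+1)$ times at $\bigO(n\log n)$ per Kruskal run, and add the $\bigO(n\log n)$ clustering extraction, with the MST maintenance dominating. Your explicit amortized count of the total writes into \newedges is a slightly more detailed justification of the paper's $\bigO(t/n+1)$ invocation count, but the decomposition and conclusion are the same.
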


The time complexity of \FISHDBC of depends on HNSWs: if they require few distance calls, computation cost remains low. We experimentally see that this is true in most real-world cases; moreover, \citet{DBLP:journals/corr/MalkovY16} show that HNSWs have $t=\bigO(l\log n)$ for adding $l$ elements under some assumptions. \citeauthor{DBLP:journals/corr/MalkovY16} provide experimental results that support this, similarly to our own results which also show a coherent behavior. When this holds, incrementally processing $l$ elements has time complexity $\bigO(l \log^2 n + n \log n)$, and processing a whole dataset has complexity $\bigO(n \log^2 n)$. Our experiments show that most computation is spent in incrementally building and updating the MST, while computing clustering is orders of magnitude cheaper (e.g., \cref{tab:synth_runtime}).

\begin{proof}
We will call \textsc{add}$(x)$ for each new element $x$ to update the model, and then \textsc{cluster} to obtain the clustering.

Core distance lookups have $\bigO(1)$ cost as they are accessible at the top of each heap in \neighbors.
The complexity of adding elements to the HNSW is $\bigO(t)$ where $t$ is the number of calls to $d()$.
In the rest of the \textsc{add} procedure (see \cref{alg:fishdbc}), the most computationally intensive part is the inner loop of lines~\ref{algline:add-inner}--\ref{algline:add-end-inner}.
This loop is executed at most $\bigO(t\minpts)$ times: the $\bigO(t)$ factor is due to the outer loop (line \ref{algline:add-outer}) and $\bigO(\minpts)$ to the inner loop. The hashtable lookup at line~\ref{algline:innerlookup1} 
has
complexity $\bigO(1)$, for an average complexity of $\bigO(t\minpts)$ for the whole time spent in the \textsc{add} procedure, excluding \textsc{update\_MST} calls.

The cost of \textsc{update\_MST} is determined by the MSF algorithm. Kruskal's algorithm, which we use, has time complexity $\bigO(E\log E)$ where $E$ is the number of input edges. Since $E\in\bigO(n)$ here, a call of \textsc{update\_MST} has cost $\bigO(n \log n)$. This function will be called $\bigO(t/n + 1)$ times, resulting in a computational complexity of $\bigO(t/n + 1)n\log n=\bigO((t+n)\log n)$ for this procedure.

The call to \textsc{cluster} has complexity $\bigO(n\log n)$~\cite{mcinnes2017accelerated}.

The dominant cost is the time spent in \textsc{update\_MST}, yielding a total complexity of $\bigO((t+n)\log n)$.
\end{proof}

\paragraph{Approximation of \HDBSCAN}

We show that the only reason for the approximation is that we do not compute all pairwise distances: \FISHDBC computes a valid result of \HDBSCAN when the latter is passed a distance matrix in which all the pairwise distances that are not computed are set to infinity.
If $d()$ is called on all the $\bigO(n^2)$ pairwise distances, we will indeed be proving that \FISHDBC is equivalent to \HDBSCAN.

We first prove that, in a reachability graph, edges with weight $\infty$ can be safely removed without any effect on the resulting clustering.

\begin{lemma}
\label{lemma:infty}
Consider two reachability graphs \RG and \RG', where \RG' is obtained by removing all edges weighted $\infty$ from \RG. Clusterings resulting from \RG and \RG' are equivalent.

\begin{proof}
The procedure we use to compute clustering~\cite{mcinnes2017accelerated} starts by considering each node as a cluster, iterates through MST edges grouped by increasing weight, and joins in the same cluster the nodes connected by those edges. When clusters of size at least \mincs are joined, they are added to the hierarchical clustering---excluding the root cluster which contains all nodes.

Let us consider the minimum spanning forests $F$ and $F'$ obtained respectively from \RG and \RG'. Because \RG is a full graph, $F$ is a spanning tree, while $F'$ may not be. If $F=F'$, the thesis is proven. If $F\neq F'$, it must be because all edges of $F'$ are present in $F$, and one or more edges having weight $\infty$ are present in $F$.
Since edges of the MST are processed by increasing weight, these $\infty$-weighted edges are processed last, hence the output for $F$ and $F'$ will be the same until then; joining edges in this last step will necessarily result in the root cluster containing all nodes which is not returned in the final results. The two outputs will therefore be the same.
\end{proof}
\end{lemma}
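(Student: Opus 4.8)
The plan is to show that the cluster-extraction procedure of \citet{mcinnes2017accelerated} records exactly the same set of non-root clusters when run on \RG and on \RG', so that both the hierarchical output and the stability-based flat output coincide. That procedure depends on its input graph only through a minimum spanning forest and the order in which that forest's edges are processed. So I would first fix minimum spanning forests $F$ of \RG and $F'$ of \RG' computed by the same greedy rule (Kruskal with a common tie-breaking order) and argue about how the two relate, and only then run the extraction on top of them.

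The first key step is to pin down the relationship $F = F' \cup E_\infty$ with $F \setminus F' \subseteq E_\infty$, where $E_\infty$ consists only of $\infty$-weight edges. Since \RG' is exactly \RG with its $\infty$-weight edges deleted and $\infty$ is the maximal possible weight, Kruskal processes every finite-weight edge before any $\infty$-weight edge; an accept/reject decision for a finite edge depends only on whether its endpoints already lie in the same partial component, which in turn depends only on previously accepted (hence finite) edges. The two runs therefore make identical decisions on every finite edge and accept the same finite-edge set $E_{\mathrm{fin}}$. Because \RG' has no further edges, $F' = E_{\mathrm{fin}}$, and its connected components are some trees $C_1,\dots,C_k$; the run on \RG then continues, adding $\infty$-weight edges to stitch these components into the spanning tree $F$. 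This both justifies $F \supseteq F'$ and identifies the components of $F'$ with the components of $E_{\mathrm{fin}}$.

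Next I would run the extraction in lockstep on $F$ and $F'$. While processing the finite-weight groups, the two runs see the same edges in the same order and perform the same merges, so they append exactly the same clusters to the hierarchical clustering—recall that a cluster is recorded precisely when two components of size at least \mincs are joined, and these events are determined entirely by $E_{\mathrm{fin}}$. The only difference occurs in the final $\infty$-weight group, which is present only for $F$. Here the point needing the most care is that all $\infty$-weight edges share one weight and are thus processed as a single group: a group of equal-weight edges merges, in one step, the union of every component it touches. Since $F$ is a spanning tree, that union is the entire node set, so this step produces nothing but the all-nodes root cluster, which is explicitly excluded from the output. (Were the $\infty$-weight edges processed one at a time instead, intermediate unions of two large components could be spuriously recorded; the grouping is exactly what rules this out.)

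Finally I would argue that identical hierarchical clusterings yield identical flat outputs. Every recorded cluster and every split inside it is created by a finite-weight edge shared by both runs, so each cluster has the same birth scale and the same internal splitting scales, and hence the same stability; the only object present for \RG but absent for \RG' is the all-nodes root, which the stability-based selection never returns, so the flat clusterings selected from the two hierarchies agree. The main obstacle is the bookkeeping of the single equal-weight $\infty$ group—verifying that it contributes only the excluded root and no spurious intermediate clusters—whereas the agreement on all finite edges is a routine consequence of the order in which Kruskal processes them.
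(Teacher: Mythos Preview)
Your proposal is correct and follows essentially the same approach as the paper's proof: both argue that the minimum spanning forests of \RG and \RG' agree on all finite-weight edges, so the extraction procedure produces identical outputs up to the final $\infty$-weight step, which only yields the excluded root cluster. Your version is more careful in two places the paper glosses over---explicitly justifying $F' \subseteq F$ via Kruskal's behaviour, and stressing that the $\infty$-weight edges must be processed as a single group to avoid spurious intermediate clusters---but this is elaboration rather than a different route.
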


We can now prove our theorem.

\begin{theorem}
\label{thm:approximation}
\thmapprox

\begin{proof}
\HDBSCAN can have several valid outputs because it is based on computing a spanning tree of the reachability graph, which may not be unique if several edges have the same weight. We prove the equivalence for at least one of the valid spanning trees.

We base ourselves on a result by \citet[Lemma 1]{eppstein_offline_1994}, which proves that minimum spanning forests (MSFs) can be built incrementally: rather than taking as input a whole graph $G$ at once we can take a subgraph $G'$, compute its MSF $F'$ and ignore the rest of $G'$. We can later add to $F'$ the parts of $G$ that were not in $G'$ and compute an MSF of the resulting graph: it will be a correct MSF $\hat F$ of $G$. Hence, we can add edges incrementally in batches and keep memory consumption low (while $G$ has size $\bigO(n^2)$, $F$ has size $\bigO(n)$). More formally, given a graph $G=(V,E)$ and a subgraph of it $G'=(V' \subseteq V, E' \subseteq E)$, for every MSF $F'$ of $G'$, there exists an MSF $\hat F$ of $G$ such that $(E' \setminus F') \cap \hat F = \emptyset$.

Given the reachability graph \RG obtained from $D'$ we consider \RG', which is \RG without all the edges having weight $\infty$. 
Due to Lemma~\ref{lemma:infty}, our goal reduces to showing that \FISHDBC will end up having in \mst a minimum spanning forest of \RG'.

Recall the \textsc{update\_\allowbreak MST} procedure 
of Algorithm~\ref{alg:fishdbc}: we iteratively add elements from \newedges to \mst and discard the edges that are not part of the MSF. Thanks to the aforementioned result by \citeauthor{eppstein_offline_1994}, our thesis is proven if all edges of \RG' eventually end up in \newedges: this is actually done in line~\ref{algline:add-everything}; the reachability distance might not be correct if some neighbors are not yet known, but this will be eventually updated to the correct value (line~\ref{algline:update-mst}) when neighbors are discovered. We may include a single edge multiple times in \newedges, but the weight always decreases: since we compute a \emph{minimum} spanning forest, only the last (and correct) value for the weight will end up in \mst at last.

Since all edges of $\RG'$ are eventually added to \newedges with their correct weights, \mst will be a minimum spanning forest of \RG', which thanks to Lemma~\ref{lemma:infty} proves our thesis.\end{proof}
\end{theorem}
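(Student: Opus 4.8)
The plan is to reduce the statement in two steps to a pure minimum‑spanning‑forest bookkeeping claim about Algorithm~\ref{alg:fishdbc}, and then to verify that claim against the \textsc{add} and \textsc{update\_MST} procedures. Fix notation: let $D'$ be the matrix with $D'_{i,j}=d(i,j)$ if $d(i,j)$ was ever evaluated during the run and $D'_{i,j}=\infty$ otherwise, let $\RG$ be the reachability graph \HDBSCAN builds from $D'$, and let $\RG'$ be $\RG$ with every $\infty$‑weighted edge removed. Since \HDBSCAN's output is determined by a minimum spanning tree of $\RG$, and such a tree need not be unique when weights tie, it suffices to exhibit one valid tree whose induced clustering coincides with \FISHDBC's output. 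By \cref{lemma:infty}, the clustering obtained from any minimum spanning forest of $\RG'$ already equals the one obtained from a minimum spanning tree of $\RG$, so the goal collapses to: \emph{at the moment \textsc{cluster} runs, \mst is a minimum spanning forest of $\RG'$}.

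Second, I would invoke the incremental‑MSF lemma of \citet[Lemma~1]{eppstein_offline_1994} exactly as the excerpt already uses it: because \textsc{update\_MST} only ever replaces $\mst\cup\newedges$ by one of its minimum spanning forests and discards the rest, and because \textsc{cluster} forces one final \textsc{update\_MST}, the final \mst is a minimum spanning forest of the \emph{union of all edges that were ever placed into} \newedges. Hence it is enough to show that (a) every edge of $\RG'$ is, at some point, present in \newedges, and (b) for each such edge the smallest weight with which it is ever inserted equals its reachability distance in $D'$, namely $\max(d(x,y),c(x),c(y))$ with $c(\cdot)$ the \emph{final} core distances. Re‑inserting an edge several times is harmless, since a minimum spanning forest keeps only the smallest weight of a parallel class; and because core‑distance estimates only shrink over time, every recorded weight is automatically $\ge$ the correct value, so (b) is really the statement that the correct value is attained at least once.

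Third — and this is the step I expect to carry the real work — I would verify (a) and (b) from the code. I would first argue that \FISHDBC's final core distance $c(a)$ for every node $a$ equals the $D'$‑core distance: every distance evaluation has the form $d(\text{currently added node},\cdot)$, \neighbors$[a]$ is updated both when $a$ is added and whenever a later \textsc{add} evaluates a distance to $a$ that beats its current \minpts‑th neighbor, so \neighbors$[a]$ ultimately holds the \minpts smallest of exactly the evaluated distances incident to $a$, and its heap top is that core distance. Claim (a) then follows because line~\ref{algline:add-everything} inserts $(x,y)$ the first time $d(x,y)$ is evaluated. For (b) one uses that the initial insertion already carries the correct reachability distance whenever $d(x,y)$ dominates both current core estimates, and that otherwise the inner loop of lines~\ref{algline:add-inner}--\ref{algline:add-end-inner} re‑inserts every incident edge whose reachability distance drops when a node's core distance drops, recomputing the weight with the then‑current (eventually final) core distances.

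The main obstacle, I expect, is precisely the last point: proving that the guards of that inner loop (``neighbor $z$ of $y$ at distance $w<v$'' and ``core distance of $z$ less than $v$'') never cause a \emph{necessary} downward correction to be skipped, so that for every $\RG'$ edge the correct reachability distance is actually recorded in some batch before \textsc{cluster}. This needs an invariant argument over the sequence of core‑distance updates — showing that the last drop of $\min(c(x),c(y))$ (after which no further correction of edge $(x,y)$ is needed) coincides with an event that re‑inserts $(x,y)$ with its final weight, from $x$'s side or $y$'s side — rather than a routine calculation. Once (a) and (b) are established, the chain closes: \mst is an MSF of $\RG'$, its clustering equals that of an MST of $\RG$ by \cref{lemma:infty}, and that is by definition a valid \HDBSCAN output on $D'$; specializing to the case where all $\bigO(n^2)$ distances are evaluated gives $D'=D$ and hence exact equivalence with \HDBSCAN.
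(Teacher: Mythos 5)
Your proposal is correct and follows essentially the same route as the paper's own proof: reduce via Lemma~\ref{lemma:infty} to showing that \mst ends up being a minimum spanning forest of \RG', invoke \citet[Lemma~1]{eppstein_offline_1994} to justify batched incremental MSF computation, and argue that every edge of \RG' is eventually placed in \newedges with weights that only decrease toward the correct reachability distance. The one step you flag as the ``main obstacle''---verifying that the guards of the inner loop at lines~\ref{algline:add-inner}--\ref{algline:add-end-inner} never skip a needed downward correction---is exactly the point the paper itself handles only by assertion (``this will be eventually updated to the correct value when neighbors are discovered''), so your treatment is, if anything, more explicit about where the remaining care is needed.
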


\begin{table*}[htbp]
    \centering
    \begin{adjustbox}{max width=\textwidth}
    \rowcolors{3}{gray!15}{white}
    \begin{tabular}{lrlllllrrll}
    \toprule
    \multirow{2}{*}{Dataset} & \multirow{2}{*}{Size} & \multirow{2}{*}{Data type} & \multirow{2}{*}{Distance function(s)} & \multirow{2}{*}{Metric} & \multirow{2}{*}{Labeled} & \multicolumn{2}{c}{Results}\\
    & & & & & & Quality & Runtime \\
    \midrule
    Blobs & 10\ 000 & 1,000 to 10,000-d vectors & Euclidean & yes & yes & \cref{tab:blobs-ext} & \cref{fig:blobs_runtime} \\
    DW-Enron & 39\ 861 & Sparse 914-d vectors & cosine & no & no & \cref{tab:internal} & \cref{tab:runtime}\\
    DW-NYTimes & 300\ 000 & Sparse 2,120-d vectors & cosine & no & no & \cref{tab:internal} & \cref{tab:runtime}\\
    Finefoods & 568\ 474 & Text (average 430 chars) & Jaro-Winkler & no & no & \cref{tab:internal} & \cref{tab:runtime}\\
    Fuzzy hashes & 15\ 402 & File digests &  \lzjd, \tlsh, \sdhash & no & yes & \cref{fig:fuzzy_runtime} & \cref{tab:fuzzy_external} \\
    Household & 2\ 049\ 280 & 7-d vectors & Euclidean & yes & no & \cref{tab:internal} & \cref{tab:runtime} \\
    Synth & 10\ 000 & 640--2,048-d sparse bool vectors & Jaccard & yes & yes & \cref{tab:synth_fid} & \cref{tab:synth_runtime} \\
    USPS & 2\ 197 & 16x16 bitmaps & Simpson score & no & yes & \cref{tab:usps-ext} & \cref{tab:runtime} \\
    \bottomrule
    \end{tabular}
    \end{adjustbox}
    \caption{Evaluated datasets.}
    \label{tab:datasets}
\end{table*}

\section{Experimental Evaluation}
\label{sec:experiments}

The key novelties of \FISHDBC with respect to \HDBSCAN are incremental implementation and
handling arbitrary data and distance functions while maintaining scalability. \HDBSCAN is regarded as an improvement on DBSCAN and known for the result quality~\cite{campello2013density,Schubert:2017:DRR:3129336.3068335}, and the accelerated implementation by \citet{mcinnes2017hdbscan} is competitive in terms of runtime with many other algorithms~\cite{mcinnes2017accelerated}.
In the following, we therefore use \citet{mcinnes2017hdbscan}'s \HDBSCAN implementation as a strong state-of-the-art baseline for both speed and clustering quality which also handles arbitrary data and distance functions and returns hierarchical results, and evaluate where \FISHDBC does (and \textit{does not}) outperform it. We refer to \citet{mcinnes2017accelerated} for comparisons between our reference \HDBSCAN implementation and other algorithms. We consider comparisons against distributed DBSCAN implementations~\cite{lulli2016ng,Song:2018:RSP:3183713.3196887} as out of scope, also because of the difficulties in performing fair comparisons between single-machine and distributed approaches~\cite{mcsherry2015scalability}.

\subsection{Experimental Setup}
\label{sec:datasets}

The goal is to test \FISHDBC's flexibility by evaluating it on several very diverse datasets and distance functions. We evaluate \FISHDBC's quality/runtime tradeoff on a single machine with 128 GB of RAM and different values of the \ef HNSW parameter: 20 for faster computation and, in some cases, lower quality, and 50 for slower computation and possibly better results. We performed experiments---reported where space allows---with other values ($\ef\in[10,200]$), which hit less desireable tradeoffs: this is remarkable, because \citet{DBLP:journals/corr/MalkovY16} report a good tradeoff between speed and approximation with a value of $\ef=100$ for their problem of nearest neighbor search; in our clustering use case, we can significantly cut computation without large impacts on result quality by choosing lower values of \ef.
Following the advice of \citet{Schubert:2017:DRR:3129336.3068335}, we use a low value of $\minpts=10$; in additional experiments---not included due to space limitations---we see that \minpts has only a minor effect on final results. HNSW parameters are set to the defaults of \citet{DBLP:journals/corr/MalkovY16}, except for \ef.

\paragraph{Datasets}

We validate \FISHDBC on 8 datasets and 8 different distance functions (\cref{tab:datasets}). While many related works are evaluated on large datasets with only a handful of dimensions, we are especially interested in high-dimensional cases, where ad-hoc lookup structures (and algorithms based on them) often do not scale well.

\begin{asparadesc}
\item{\bf Blobs.} Synthetic labeled datasets of isotropic Gaussian blobs (10 centers, 10,000 samples) generated with scikit-learn~\cite{scikit-learn}. Results are averaged over generated datasets; the standard deviation is small enough that it would not be discernible in plots.
\item{\bf Docword.} The DW-* datasets~\cite{asuncion2007uci} represent text documents as high-dimensional bags of words; here, we use cosine distance.
\item{\bf Finefoods} consists of unlabeled textual food reviews~\cite{mcauley2013amateurs}, which we cluster with the Jaro-Winkler edit distance~\cite{winkler1999state}.
\item{\bf Fuzzy Hashes} are digests of binary files from the study of \citet{pagani2018beyond}---digests can be compared to output a similarity score between files. We use three algorithms: \lzjd~\cite{raff2018lempel}, \sdhash~\cite{breitinger2012security} 
and \tlsh~\cite{oliver2013tlsh}. \sdhash and \tlsh have been evaluated as sound approaches by \citeauthor{pagani2018beyond}, while \lzjd is a recent improvement~\cite{raff2018lempel}. Files have 5 labels each: program, package, version, compiler used to build it, and options passed to the compiler.
\item{\bf Household} is a large unlabeled 7-dimensional dataset of power consumption data~\cite{asuncion2007uci}. We use Euclidean distance.
\item{\bf Synth} datasets 
are created with \citet{cesario2007top}'s generator, simulating transactions as event sets. 
In each, we generate 5 clusters of transactions with no outliers, no overlapping and dimensionality varying between 640 and 2,048.
We use Jaccard distance.
\item{\bf USPS.} A set of 16x16-pixel images of handwritten letters~\cite{laub2004feature}. Like other works~\cite{laub2004feature,filippone2009dealing},
we consider the 0 and 7 digits and
discretize them to a bitmap using a threshold of 0.5, and we consider only those with at least 20 pixels having a value of 1, for a total of 2,196 elements. As in these works, 
we use the Simpson score as our distance function. Where $\mathtt{\&}$ is the bitwise-and function and $c()$ is the function that returns the number of `1' bits, the Simpson distance between bitmaps $x$ and $y$ is $1-c(x\mathtt{\&}y)/\min(c(x), c(y)).$
\end{asparadesc}

\begin{figure*}[thbp]
    \newcommand{\width}{.32\textwidth}
    \centering
    \begin{subfigure}[t]{\width}
        \includegraphics[width=\textwidth]{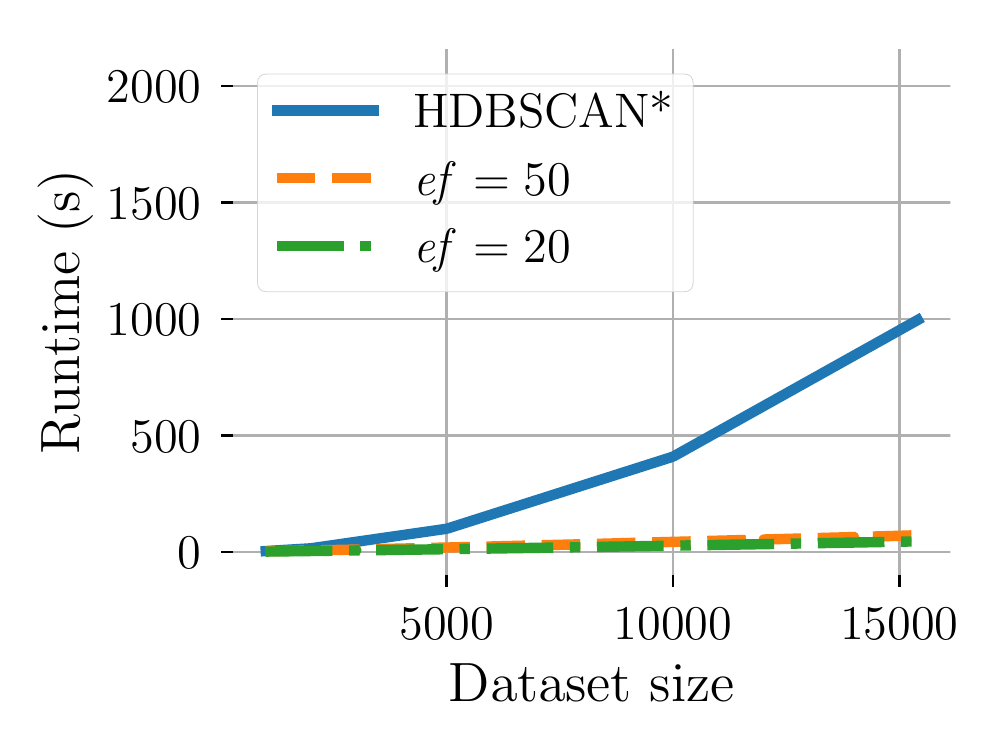}
        \caption{\lzjd.}
        \label{fig:fuzzy-runtime-lzjd}
    \end{subfigure}
    \begin{subfigure}[t]{\width}
        \includegraphics[width=\textwidth]{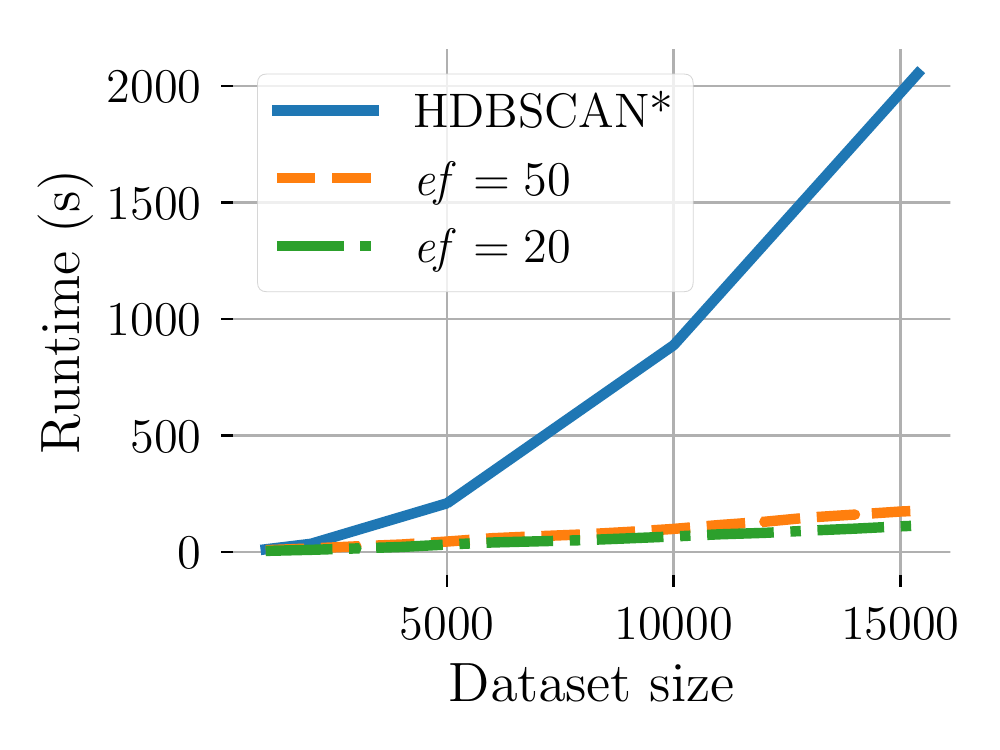}
        \caption{\sdhash.}
    \end{subfigure}
    \unboldmath
    \begin{subfigure}[t]{\width}
        \includegraphics[width=\textwidth]{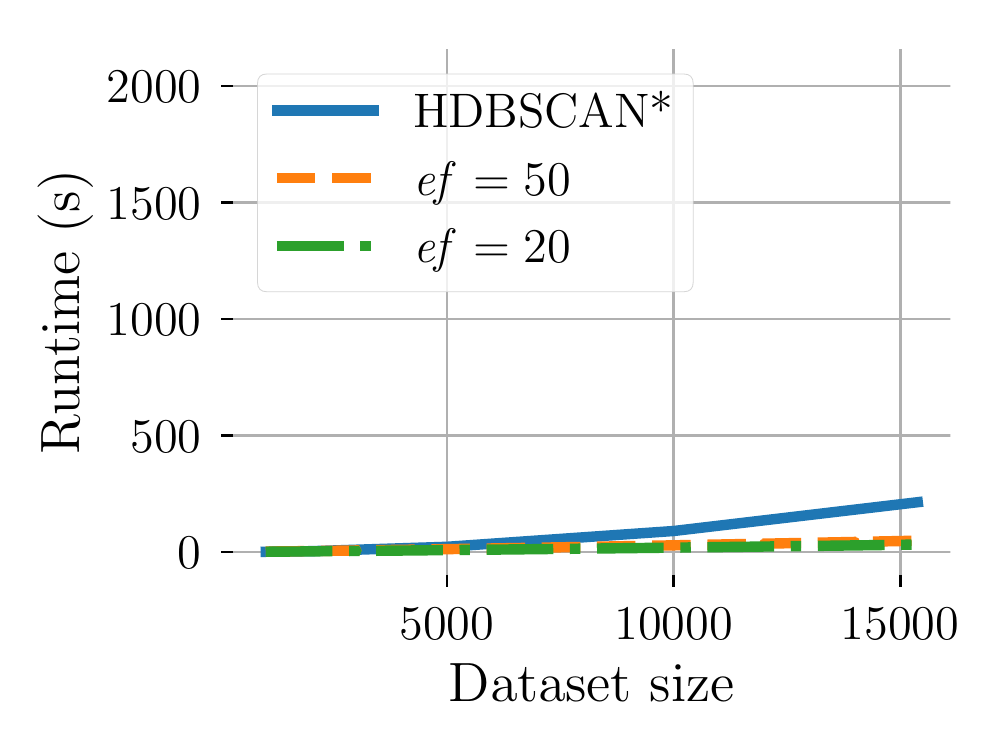}
        \caption{\tlsh.}
        \label{fig:fuzzy-runtime-tlsh}
    \end{subfigure}
    \caption{Fuzzy hashes: runtime comparison. The $\ef=\{20,50\}$ parameter is passed to the HNSW.}
    \label{fig:fuzzy_runtime}
\end{figure*}


\begin{table*}[htbp]
\centering
\begin{tabular}{llrrrrrrrrrrrr}
\toprule
\multirow{2}{*}{Fuzzy hash} & Clustering & \multirow{2}{*}{\ef} & \# clustered & \multicolumn{2}{c}{Program} & \multicolumn{2}{c}{Package} & \multicolumn{2}{c}{Version} & \multicolumn{2}{c}{Compiler} & \multicolumn{2}{c}{Options} \\
& algorithm && elements & AMI & AMI* & AMI & AMI* & AMI & AMI* & AMI & AMI* & AMI & AMI* \\
\midrule
\multirow{3}{*}{\lzjd} & \multirow{2}{*}{\FISHDBC} & 20 & 12\ 710 & 0.47 & 0.42 & \bf 0.14 & \bf 0.13 & \bf 0.10 & \bf 0.09 & \bf 0.14 & \bf 0.13 & \bf 0.16 & \bf 0.15 \\
&& 50 & 12\ 879 & \bf 0.48 & \bf 0.43 & \bf 0.14 & \bf 0.13 & 0.09 & 0.08 & \bf 0.14 & \bf 0.13 & \bf 0.16 & \bf 0.15 \\
& \HDBSCAN & & \bf 13\ 365 & 0.47 & \bf 0.43 & \bf 0.14 & \bf 0.13 & 0.09 & 0.08 & \bf 0.14 & \bf 0.13 & \bf 0.16 & \bf 0.15 \\
\midrule
\multirow{3}{*}{\sdhash} & \multirow{2}{*}{\FISHDBC} & 20 & 6\ 905 & 0.52 & 0.21 & \bf 0.14 & 0.11 & \bf 0.09 & 0.07 & \bf 0.15 & 0.12 & \bf 0.15 & 0.12 \\
&& 50 & 9\ 614 & \bf 0.53 & 0.35 & \bf 0.14 & \bf 0.12 & \bf 0.09 & \bf 0.08 & \bf 0.15 & \bf 0.13 & 0.14 & \bf 0.13 \\
& \HDBSCAN && \bf 13\ 184 & 0.46 & \bf 0.42 & 0.12 & 0.11 & 0.07 & 0.07 & 0.12 & 0.11 & 0.12 & 0.11 \\
\midrule
\multirow{3}{*}{\tlsh} & \multirow{2}{*}{\FISHDBC} & 20 & 9\ 746 & \bf 0.46 & 0.30 & \bf 0.18 & \bf 0.16 & \bf 0.13 & \bf 0.12 & \bf 0.17 & \bf 0.15 & \bf 0.19 & \bf 0.16 \\
&  & 50 & 10\ 046 & \bf 0.46 & \bf 0.32 & \bf 0.18 & \bf 0.16 & \bf 0.13 & \bf 0.12 & \bf 0.17 & \bf 0.15 & 0.18 & \bf 0.16 \\
& \HDBSCAN && \bf 12\ 958 & 0.34 & 0.31 & 0.13 & 0.13 & 0.10 & 0.09 & 0.13 & 0.13 & 0.15 & 0.14 \\
\bottomrule
\end{tabular}
\caption{Fuzzy hashes: external quality metrics, applied to different distance functions (rows) and labels (columns).}
\label{tab:fuzzy_external}
\end{table*}

\paragraph{Quality metrics}

We evaluate clustering on labeled data\-sets with external metrics: adjusted mutual information (AMI) and adjusted Rand index (ARI). These metrics 
vary between 0 (random clustering) and 1 (perfect matching). Like most density-based clustering algorithms, \FISHDBC does not cluster \emph{all} the elements, returning instead a set of unclustered ``noise'' elements: for this reason, we compute AMI and ARI by taking into account only the clustered elements. A metric like this, however, may reward clusterings that only group extremely similar items and mark as noise the rest of the dataset: hence, we use two additional metrics---respectively, AMI* and ARI*---that consider all noise items as a single additional cluster. While AMI/ARI evaluate whether \textit{clustered} elements are grouped similarly to the reference labeling, AMI*/ARI* penalize outputs that do not cluster many items. Other options can be envisioned, such as treating each noise item as a single cluster, but this could trigger known problems as metrics such as AMI are biased against solutions with many small clusters~\cite{gates2017impact}.
\citet{romano2016adjusting} advise using AMI rather than ARI for unbalanced datasets; as this can be the case when some clusters are disproportionately recognized as noise, we always use AMI and include ARI when space allows it.

For unlabeled datasets, we resort to internal metrics, such as silhouette, intra- and inter-cluster distance~\cite{liu2010understanding}. Silhouette 
is expensive to compute and generally requires more memory than \FISHDBC, hence we obtained out-of-memory errors (\textit{OOM}) on larger datasets; for intra-cluster (lower is better) and inter-cluster distance (higher is better) we resorted, for the larger clusters, to sampling, choosing two random elements from the same cluster (intra-cluster) or different clusters (inter-cluster), normalizing the probability of choosing each cluster to ensure that each pair has the same probability of being selected. We use a sample size of 10,000.

We do not use the density-based clustering validation metric by \citet{moulavi2014density}, as---besides having $\bigO(n^2)$ complexity---it is designed for low-dimensional datasets: results are unstable and overflow in our case because distances are exponentiated by the number of dimensions.

\subsection{Ad-Hoc Distance Measures}

We now consider distance measures for which our reference \HDBSCAN implementation~\cite{mcinnes2017hdbscan} does not provide accelerated support; in such cases, it is still possible to run \HDBSCAN by computing a pairwise distances matrix. Here, \FISHDBC can scale better than \HDBSCAN because of the lower asymptotical complexity.

\paragraph{Fuzzy Hashes}

This dataset has the interesting property of having overlapping class labels. We start by analyzing \cref{fig:fuzzy_runtime}: here, computational cost is dominated by the calls to the distance function, and we clearly see a quadratic increase in runtime for \HDBSCAN---differences between \HDBSCAN results are essentially due to the differences in cost between the distance functions. \FISHDBC consistently scales much better than \HDBSCAN.

The quality metrics of \cref{tab:fuzzy_external}, where we evaluate AMI and AMI* for each fuzzy hash algorithm/labeling pair, inspire some considerations.

First, \HDBSCAN consistently clusters more files than \FISHDBC, but the AMI score of \FISHDBC is often higher. This means that \FISHDBC identifies more elements as noise, while outputting the other elements in more coherent clusters.

Second (with the single exception of \sdhash applied to the ``program'' label where \FISHDBC's approximation appears to impact result quality negatively), the AMI* scores of \HDBSCAN are generally equivalent or worse than those of \FISHDBC, suggesting that the additional elements clustered by \HDBSCAN are often not well clustered. This can be explained by the argument of \cref{sec:algorithm}, which suggests that---by working as regularization---\FISHDBC's approximation can improve output quality. By manually examining results, we confirm that the hierarchical clustering of \FISHDBC is generally simpler, with fewer larger clusters and a shallower hierarchy.

\begin{table}[htbp]
    \centering
    \rowcolors{3}{gray!15}{white}
    \begin{tabular}{rrrrrr}
    \toprule
    \multirow{2}{*}{dim} & \multicolumn{2}{c}{\FISHDBC ($\ef=20$)} & \multicolumn{2}{c}{\FISHDBC ($\ef=50$)} & \multirow{2}{*}{\HDBSCAN} \\
    \cmidrule(lr){2-3}\cmidrule(lr){4-5}
    & build & cluster & build & cluster & \\
    \midrule
    640 & \bf 67.5 & 0.21 & 109 & 0.24 & 115 \\
    1\ 024 & \bf 65.7 & 0.19 & 103 & 0.20 & 100 \\
    2\ 048 & \bf 82.2 & 0.22 & 126 & 0.23 & 155 \\
    \bottomrule
    \end{tabular}
    \caption{Synth: runtime (s). ``Build'' is the time to incrementally build the \FISHDBC data structures, ``cluster'' the time to compute clustering using them as input.}
    \label{tab:synth_runtime}
\end{table}

\begin{table}[htbp]
\centering
    \rowcolors{3}{gray!15}{white}
    \begin{tabular}{rrrrrrr}
    \toprule
    \multirow{2}{*}{\ef} & \multicolumn{2}{c}{$\mathrm{dim}=640$} &\multicolumn{2}{c}{$\mathrm{dim}=1,024$} & \multicolumn{2}{c}{$\mathrm{dim}=2,048$} \\
    & AMI* & ARI* & AMI* & ARI* & AMI* & ARI* \\
    \midrule
    20 & 0.89 & 0.94 & 0.96 & 0.99 & \bf 1 & \bf 1 \\
    50 & \bf 0.96 & \bf 0.98 & \bf 0.96 & \bf 0.99 & \bf 1 & \bf 1 \\
    \midrule
    \HDBSCAN & 0.49 & 0.75 & 0.79 & 0.95 & \bf 1 & \bf 1 \\
    \bottomrule
    \end{tabular}
    \caption{Synth: external quality metrics.}
    \label{tab:synth_fid}
\end{table}

\paragraph{Synth}

\Cref{tab:synth_runtime} reports on runtime while varying \ef. \FISHDBC spends most of the time building incrementally its data structures, while the cost of extracting a clustering from them is more than two orders of magnitude cheaper. Therefore clustering can be recomputed, cheaply, as the data structure grows; as shown in \cref{tab:runtime}, this is the case in all our datasets.
\FISHDBC outperforms \HDBSCAN here, with a margin growing as the dimensionality (and hence the cost of the distance function) grows. Compared to the Fuzzy Hashes dataset, the smaller difference is largely due to a cheaper distance function.
Quality results in \cref{tab:synth_fid} are perhaps more surprising: for 640 and 1,024 dimensions, \FISHDBC substantially outperforms \HDBSCAN; once again, we attribute this to the regularization effect described in Section~\ref{sec:algorithm}. As the dimensionality grows, 
clusters become more separated and quality metrics values grow.

\begin{figure}[t]
    \centering
        \includegraphics[width=\columnwidth]{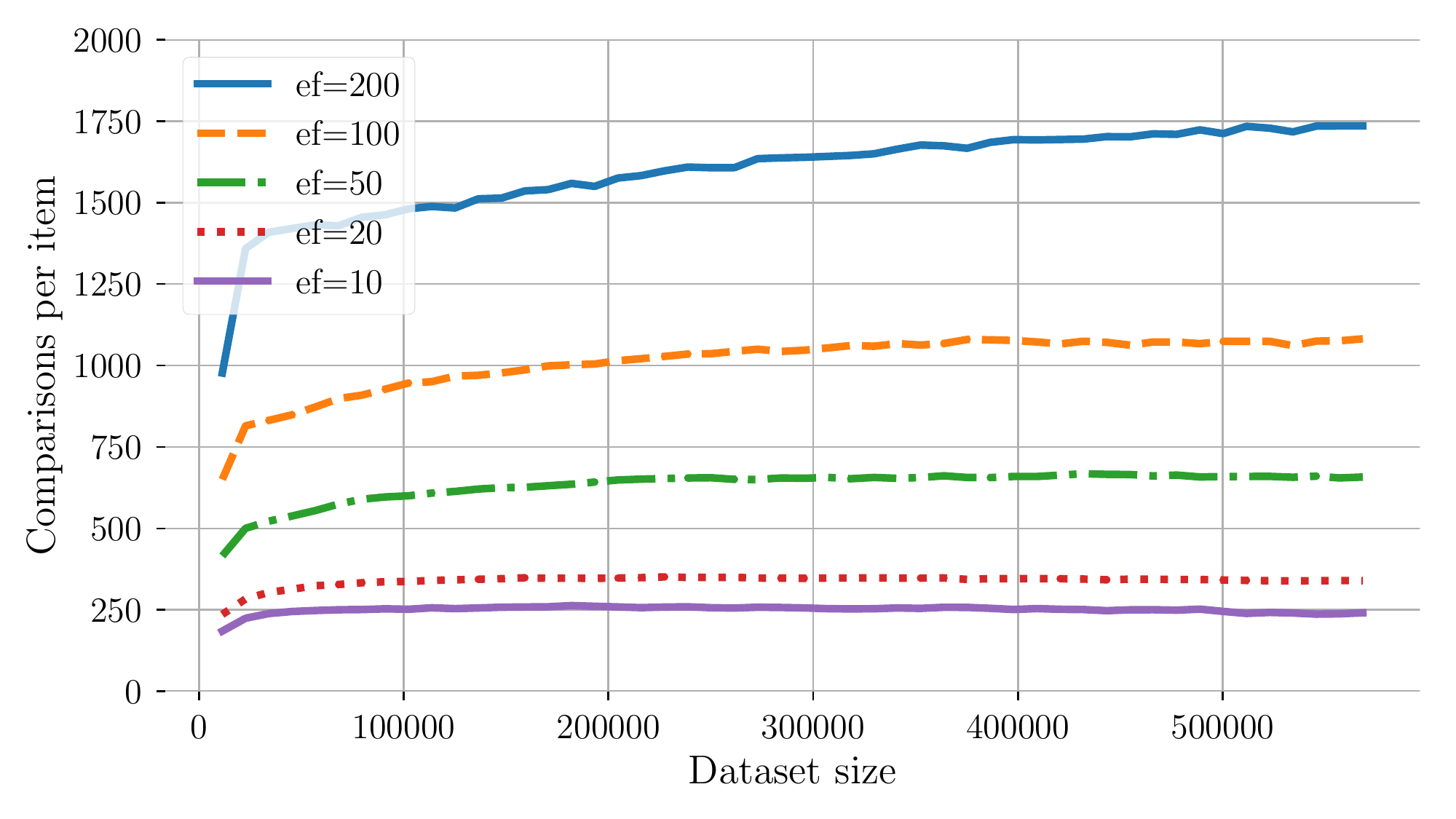}
    \caption{Finefoods: scalability as the dataset size increases.}
    \label{fig:finefoods_building}
\end{figure}

\paragraph{Finefoods}

This dataset is rather large, and the Jaro-Winkler distance applied to it is quite expensive. We could not apply \HDBSCAN to this dataset, as the full distance matrix would be very expensive to compute and could not fit in memory; this dataset allows us investigate \FISHDBC's scalability. In \cref{fig:finefoods_building} we observe the average number of calls to the distance function performed per item as new elements get introduced in the \FISHDBC data structure (a clustering is computed every time 2\% of the dataset is added). We can see that, in the beginning, the number of comparisons grows as the dataset does, but it tends to plateau afterwards.
Results for quality metrics and runtime are available in Tables~\ref{tab:internal} and~\ref{tab:runtime}.

\begin{table}[t]
    \centering
    \rowcolors{2}{gray!15}{white}
    \begin{tabular}{rrrrrrrr}
    \toprule
    \ef & \# clustered & AMI & AMI* & ARI & ARI* \\
    \midrule
    20 & \bf 1\ 334 & \bf 1 & \bf 0.41 & \bf 1 & \bf 0.41 \\
    50 & 1\ 307 & \bf 1 & 0.40 & \bf 1 & 0.40 \\
    \midrule
    \HDBSCAN & 1\ 102 & 0.53 & 0.25 & 0.59 & 0.20 \\
    \bottomrule
    \end{tabular}
    \caption{USPS: external quality metrics.}
    \label{tab:usps-ext}
\end{table}

\paragraph{USPS}
In this smaller dataset, the runtime results of \cref{tab:runtime}---while in any case small---are preferable for \HDBSCAN, as the advantages brought by asymptotical complexity are irrelevant here. Results in \cref{tab:usps-ext} are, on the other hand, quite interesting: once again, the regularization effects discussed in \cref{sec:algorithm} improve the quality metrics on the results. In particular, AMI and ARI are both equal to 1, showing that \FISHDBC always returns two clusters: one for each of the two labels in the original dataset (AMI*/ARI* values are still lower than 1 because many digits are still considered as noise). On the other hand, \HDBSCAN returns a larger number of clusters (11), and some of them contain mixed labels.

\paragraph{Summary}
\FISHDBC enables performant clustering in cases where computing the full distance matrix falls short. 
Moreover, \FISHDBC rarely fares worse than \HDBSCAN in terms of quality metrics---in various cases, indeed, regularization effects improve result quality.

\begin{figure}[t]
    \centering
    \includegraphics[width=\columnwidth]{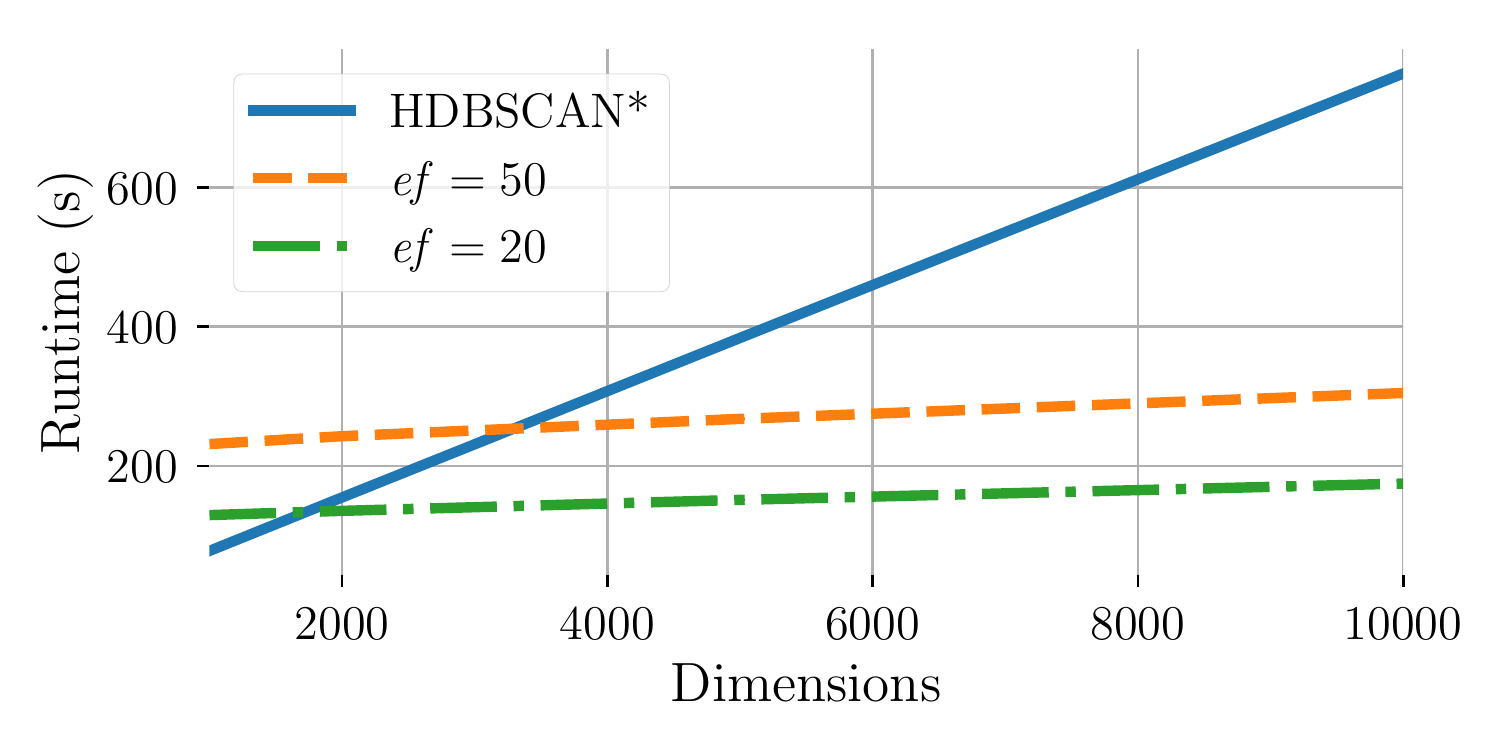}
    \caption{Blobs: runtime comparison.}
    \label{fig:blobs_runtime}
\end{figure}

\begin{table}[t]
    \centering
    \rowcolors{3}{gray!15}{white}
    \begin{tabular}{rrrrrrr}
    \toprule
    \multirow{2}{*}{Dimensions} & \multicolumn{2}{c}{$\ef=20$} & \multicolumn{2}{c}{$\ef=50$} & \multicolumn{2}{c}{\HDBSCAN} \\
    & AMI* & ARI* & AMI* & ARI* & AMI* & ARI* \\
    \midrule
    1\ 000 & 0.98 & 0.99 & 0.99 & 0.99 & \bf 1 & \bf 1 \\
    2\ 000 & 0.98 & 0.99 & 0.99 & 0.99 & \bf 1 & \bf 1 \\
    5\ 000 & 0.98 & 0.99 & 0.99 & 0.99 & \bf 1 & \bf 1 \\
    10\ 000 & 0.98 & 0.99 & 0.98 & 0.99 & \bf 1 & \bf 1 \\
    \bottomrule
    \end{tabular}
    \caption{Blobs: external quality metrics.}
    \label{tab:blobs-ext}
\end{table}

\begin{table*}[htbp]
    \newcommand{\cfill}{\xrfill[2.5pt]{0.05pt}}
    \newcommand{\oom}{\multicolumn{7}{c}{\cfill\textit{Out of memory}\cfill}}
    \centering
    \begin{adjustbox}{max width=\textwidth}
    \begin{tabular}{lrrrrrrrrr}
        \toprule
        \multirow{2}{*}{Dataset} & \multirow{2}{*}{Size} & \multirow{2}{*}{Algorithm (\ef)} & \multicolumn{2}{c}{Clustered elements} & \multicolumn{2}{c}{Clusters} & \multirow{2}{*}{Silhouette} & \multicolumn{2}{c}{Average distance} \\
        \cmidrule(lr){4-5} \cmidrule(lr){6-7} \cmidrule(lr){9-10}
        & & & flat & hierarchical & flat & hierarchical & & intra-cluster & inter-cluster  \\
        \midrule
        & 
           & \FISHDBC (20) & \bf 398 & \bf 1\ 546 &     4 &    10 &     0.509 &     0.381 & \bf 0.882 \\
         & & \FISHDBC (50) &     385 &        995 &     3 &     6 &     0.513 &     0.381 &     0.871 \\
        \multirow{-3}{*}{DW-Kos}  & \multirow{-3}{*}{3\ 430} &  \HDBSCAN &     353 &        353 & \bf 2 & \bf 4 & \bf 0.532 & \bf 0.375 &     0.854 \\
        \midrule
        \multirow{3}{*}{DW-Enron} & \multirow{3}{*}{39\ 861}
           & \FISHDBC (20) &      6\ 094 &     36\ 039 & \bf 222 & \bf 454 & \bf 0.552 & \bf 0.301 &     0.969 \\
         & & \FISHDBC (50) &      6\ 340 &     34\ 408 &     238 &     486 &     0.549 &     0.309 &     0.969 \\
         & &  \HDBSCAN & \bf  7\ 206 & \bf 39\ 344 &     299 &     642 &     0.469 &     0.326 & \bf 0.973 \\
        \midrule
        \multirow{3}{*}{DW-Nytimes} & \multirow{3}{*}{300\ 000} 
           & \FISHDBC (20) &     29\ 546 &     299\ 729 & \bf 802 & \bf 1\ 754 & \textit{OOM} &     0.552 &     0.967 \\
         & & \FISHDBC (50) & \bf 31\ 404 & \bf 299\ 757 &     888 &     1\ 924 & \textit{OOM} &     0.552 & \bf 0.968 \\
         & &  \HDBSCAN & \oom \\
        \midrule
        \multirow{3}{*}{Finefoods} & \multirow{3}{*}{568\ 464}
           & \FISHDBC (20) &      77\ 152 &     566\ 484 & \bf 2\ 924 & \bf 6\ 262 & \textit{OOM} &     0.282 & \bf 0.372 \\
         & & \FISHDBC (50) &      79\ 904 &     568\ 104 &     3\ 531 &     7\ 486 & \textit{OOM} & \bf 0.226 &     0.363 \\
         & &  \HDBSCAN & \oom \\
        \midrule
        \multirow{3}{*}{Household} & \multirow{3}{*}{2\ 049\ 280}
           & \FISHDBC (20) &     1\ 587\ 223 &     2\ 049\ 175 &     12\ 268 & \bf 61\ 582 & \textit{OOM} &     2.71 &     13.48 \\
         & & \FISHDBC (50) & \bf 1\ 649\ 304 &     2\ 049\ 224 & \bf 11\ 198 &     61\ 902 & \textit{OOM} &     2.76 &     13.17 \\
         & &  \HDBSCAN &     1\ 395\ 980 & \bf 2\ 049\ 273 &     53\ 358 &    173\ 198 & \textit{OOM} &    \bf 2.41 & \bf 13.94 \\
        \bottomrule
    \end{tabular}
    \end{adjustbox}
    \caption{Internal clustering quality metrics. \textit{OOM} stands for out-of-memory errors when computing the Silhouette metric.}
    \label{tab:internal}
\end{table*}

\begin{table}[htbp]
    \centering
    \begin{adjustbox}{max width=\columnwidth}
    \rowcolors{3}{gray!15}{white}
    \begin{tabular}{lrrrrr}
    \toprule
    \multirow{2}{*}{Dataset} & \multicolumn{2}{c}{$\ef=20$} & \multicolumn{2}{c}{$\ef=50$} & \HDBSCAN \\
    & build & cluster & build & cluster & (accelerated?) \\
    \midrule
    Blobs & \multicolumn{5}{c}{\textit{see \Vref{fig:blobs_runtime}}} \\
    DW-Kos & 27.4 & 0.102 & 37.1 & 0.103 & \textbf{1.06} (yes) \\
    DW-Enron & 616 & 2.39 & 851 & 2.06 & \textbf{112} (yes) \\
    DW-NYTimes & \textbf{8\ 733} & 41.1 & 12\ 604 & 36.8 & \textit{OOM} (yes) \\
    Finefoods & \textbf{50\ 422} & 48.9 & 84\ 765 & 42.9 & \textit{OOM} (no) \\
    Fuzzy hashes & \multicolumn{5}{c}{\textit{see \Vref{fig:fuzzy_runtime}}} \\
    Household & 27\ 375 & 123 & 38\ 759 & 109 & \textbf{24\ 258} (yes) \\
    Synth & \multicolumn{5}{c}{\textit{see \Vref{tab:synth_runtime}}} \\
    USPS & 9.1 & 0.0500 & 12.1 & 0.0502 & \textbf{1.57} (no) \\
    \bottomrule
    \end{tabular}
    \end{adjustbox}
    \caption{Runtime (in seconds).}
    \label{tab:runtime}
\end{table}

\subsection{\FISHDBC Versus Accelerated \HDBSCAN}

We now consider Euclidean and cosine distance, for which \HDBSCAN provides a high-performance accelerated implementation.

\paragraph{Blobs}
These datasets have between 1,000 and 10,000 dimensions. \HDBSCAN uses a KD-tree here, but as the number of dimensions grows the effectiveness of such data structures decreases. In \cref{fig:blobs_runtime}, we see how the computation for \HDBSCAN increases quite steeply as dimensionality grows; on the other hand, growth is definitely slower for \FISHDBC thanks to the lower cost of approximated search through HNSWs.

Quality metrics in \cref{tab:blobs-ext} show that, here, \FISHDBC pays a small price in terms of clustering quality. Here, the experiment was repeated on 30 randomly generated datasets for each number of dimensions, and the standard deviation in AMI* and ARI* is, in all cases, 0.01 for \FISHDBC and 0 for \HDBSCAN.

\paragraph{Household}

In this 7-dimensional Euclidean dataset, one may speculate that \FISHDBC would be largely outperformed by the accelerated ad-hoc \HDBSCAN implementation (it uses an elaborate dual-tree version of Borůvka's algorithm). Actually, as reported in \cref{tab:runtime}, \HDBSCAN is only slightly faster than \FISHDBC. It is possible that optimizations on constant factors, e.g., swapping our pure Python HNSW implementation with a faster one, could make \FISHDBC faster in this case as well.
Intra- and inter-cluster quality metrics (\cref{tab:internal}) are better for \HDBSCAN, but \FISHDBC produces a smaller number of clusters, which is arguably more desirable for data exploration because the summarization due to clustering is more succinct.
While a considerable number of elements are categorized as noise in the flat clustering, almost all elements end up in a cluster when we consider the hierarchical clustering, which can facilitate data exploration tasks. This benefit is shared by both \FISHDBC and \HDBSCAN, for most datasets reported in \cref{tab:internal}.

\paragraph{Docword}

We conclude our evaluation by examining sparse vector datasets where we use cosine distance, which has an accelerated ad-hoc implementation in \HDBSCAN~\cite{mcinnes2017hdbscan}.
Internal quality metrics in \cref{tab:internal} are again similar between \FISHDBC and \HDBSCAN. Results on runtime in \cref{tab:runtime}, however, are quite different: the lookup structures of \HDBSCAN result in faster execution but larger memory footprint; hence, \FISHDBC can compute results for DW-NYTimes while \HDBSCAN fails with an out-of-memory error.

\paragraph{Summary}
Ad-hoc lookup structures are appealing, but they do not always outperform the generic acceleration of \FISHDBC. \FISHDBC outperforms \HDBSCAN in very high-dimensional dense datasets like Blobs, and because of its lower memory footprint it can handle dataset that \HDBSCAN cannot like DW-NYTimes.

\section{Conclusion}
\label{sec:discussion}
\label{sec:conclusion}

\FISHDBC can deal with arbitrary distance functions and can handle datasets that are too large for our \HDBSCAN reference. Its core features are providing cheap, incremental computation while supporting arbitrary data and distance functions, avoiding $\bigO(n^2)$ complexity without needing filter functions or lookup indices: domain experts are free to write arbitrarily complex distance functions reflecting the quirks of the data at hand. In addition to being incremental, scalable and flexible, \FISHDBC supports hierarchical clustering. It is also an option for very high-dimensional datasets where lookup structures suffer from the curse of dimensionality: our results shows that for datasets that have very high dimensionality \FISHDBC can outperform ad-hoc accelerated approaches.

We believe that separating neighbor discovery from incremental model maintenance is a powerful approach, which allows for algorithms that are easier to reason about, implement and improve.

\balance
\bibliographystyle{ACM-Reference-Format}
\bibliography{bibliography}

\end{document}